\providecommand{\tabularnewline}{\\}
\theoremstyle{plain}
\newtheorem{thm}{\protect\theoremname}[section]
  \theoremstyle{definition}
  \newtheorem{defn}[thm]{\protect\definitionname}
  \theoremstyle{plain}
  \newtheorem{prop}[thm]{\protect\propositionname}
\title{Automatic Thresholding of SIFT Descriptors}
\name{Matthew R. Kirchner\thanks{This research was supported by the Office of Naval Research, ILIR: 4764}}
\address{Image and Signal Processing Branch, Research Office, Code 4F0000D \\ Naval Air Warfare Center Weapons Division, China Lake, CA 93555 USA \\ matthew.kirchner@navy.mil}
  \providecommand{\definitionname}{Definition}
  \providecommand{\propositionname}{Proposition}
\providecommand{\theoremname}{Theorem}
\begin{document}
\maketitle
\ninept  
\begin{abstract}
We introduce a method to perform automatic thresholding of SIFT descriptors
that improves matching performance by at least 15.9\% on the Oxford
image matching benchmark. The method uses a contrario methodology
to determine a unique bin magnitude threshold. This is done by building
a generative uniform background model for descriptors and determining
when bin magnitudes have reached a sufficient level. The presented
method, called meaningful clamping, contrasts from the current SIFT
implementation by efficiently computing a clamping threshold that
is unique for every descriptor.
\end{abstract}
\begin{keywords} SIFT, Clamping, A contrario method, Helmholtz principal,
Gestalt theory \end{keywords} 

\section{Introduction\label{sec:Introduction}}

The SIFT descriptor, introduced by Lowe in \cite{lowe2004distinctive},
is a widely used descriptor in image processing and machine learning.
The descriptor and its variants have been thoroughly studied and have
been shown to systematically outperform other descriptors \cite{mikolajczyk2005performance}.
Many extensions have been proposed, some of which include sampling
on a log-polar grid \cite{mikolajczyk2005performance,rabin2009approches},
reducing the dimension with PCA \cite{ke2004pca}, and scale pooling
by averaging descriptors sampled from a neighborhood around the detected
scale \cite{dongS15}. However, little known work has been performed
to study and enhance the descriptor thresholding that is presented
as part of the method. This thresholding \cite{otero2014anatomy},
also called \emph{clamping}, was introduced by Lowe as an ad-hoc way
to achieve robustness to non-linear illumination effects, such as
sensor saturation. This would lead us to believe the clamping process
would improve matching performance on image pairs that exhibit significant
illumination changes; but have little effect on images with similar
lighting conditions. However, Lowe's clamping method can greatly increase
matching performance (14.4\% improvement on the Oxford dataset) on
general image pairs even when no significant illumination changes
exist. 

This work proposes a novel method, which we call \emph{meaningful
clamping} (MC), to automatically threshold SIFT descriptors and improves
on the idea of clamping by providing a rigorous process to compute
the clamping threshold. This leads to significantly increased performance
over the existing clamping method on a wide variety of image matching
problems. The method is based on a contrario methodology for computing
detection thresholds \cite{desolneux2007gestalt}, and is introduced
in Sec. 3. Matching results with experiments performed on the Oxford
dataset \cite{mikolajczyk2005comparison} are shown in Sec. 5, and
confirm state-of-the-art results.

\section{The SIFT Descriptor\label{sec:The-SIFT-Descriptor}}

The image matching problem can be separated into two parts: feature
detection and feature description. The goal of a feature detector
is to produce a set of stable feature frames that can be detected
reliably across corresponding image pairs. Examples of methods that
detect similarity feature frames include SIFT, SURF \cite{bay2006surf},
SFOP \cite{forstner2009detecting}, Harris-Laplace \cite{mikolajczyk2004scale},
and Hessian-Laplace \cite{mikolajczyk2004scale}. Other methods have
been developed to detect affine feature frames such as MSER \cite{matas2004robust},
LLD \cite{cao2008theory}, Harris-Affine \cite{mikolajczyk2004scale},
and Hessian-Affine \cite{mikolajczyk2004scale}. For any given detected
feature, its frame determines how to sample a normalized patch $J\left(x,y\right)$,
for which we build a descriptor ${\bf d}$. \emph{The goal of the
descriptor is to distinctly represent the image content of the normalized
patch in a compact way}. 

We propose to create an extension of the SIFT descriptor, since it
has been shown to systematically outperform other descriptors \cite{mikolajczyk2005performance}.
The SIFT descriptor is a smoothed and weighted 3D histogram of gradient
orientations. For any patch $J$, we form a gradient vector field
$\nabla J$. We define the grid $\Lambda$, which determines the bin
centers $x_{i},y_{j},\theta_{k}$ of the histogram and has size $n(x)\times n(y)\times n(\theta)$.
In typical implementations, $\Lambda$ is chosen to have $4\times4$
spatial bins and $8$ angular bins. With ${\bf x}=\left(x,y\right)$
and $\ell=\left(i,j,k\right)\in\Lambda$, a single, pre-normalized
spatial bin of the SIFT descriptor can be written as the integral
expression: 
\begin{equation}
{\bf d}\left(\ell|J\right)=\int g_{\sigma}\left({\bf x}\right)w_{\alpha}\left(\angle\nabla J\left({\bf x}\right)\right)w_{ij}\left({\bf x}\right)\left\Vert \nabla J\left({\bf x}\right)\right\Vert d{\bf x},\label{eq: SIFT descriptor formal definition.}
\end{equation}
where $w_{ij}\left({\bf x}\right)=w\left(x-x_{i}\right)w\left(y-y_{j}\right)$
\cite{dongS15,vedaldi2010vlfeat}. The weight function $w_{ij}$ is
a bilinear interpolation with
\[
w\left(z\right)=\text{max}\left(0,1-\frac{n(z)}{2\lambda_{\text{patch}}}\left|z\right|\right);
\]
 and
\[
w_{\alpha}\left(\theta\right)=\text{max}\left(0,1-\frac{n(\theta)}{2\pi}\left|\theta_{k}-\theta\,\text{mod}\,2\pi\right|\right)
\]
 is an angular interpolation \cite{otero2014anatomy}. The parameter
$\lambda_{\text{patch}}$ is the radius of $J$ such that the patch
has dimensions $2\lambda_{\text{patch}}\times2\lambda_{\text{patch}}$.
The histogram samples are also weighted by a Gaussian density function
$g_{\sigma}\left({\bf x}\right)$, the purpose of which is to discount
the contribution of samples at the edge of the patch with the goal
to reduce boundary effects. The building of SIFT descriptors using
Eq. \ref{eq: SIFT descriptor formal definition.} for all experiments
was performed with the VLFeat open source vision library \cite{vedaldi2010vlfeat}\footnote{The VLFeat library estimates Eq. \ref{eq: SIFT descriptor formal definition.}
by sampling a discrete grid.}. For further details on how the descriptor was constructed, the reader
is encouraged to review \cite{otero2014anatomy,vedaldi2010vlfeat}.

\subsection{Clamping\label{subsec:Clamping}}

In an effort to design a descriptor to be robust to non-linear contrast
changes, Lowe proposed to threshold the bin magnitudes of the descriptor.
Lowe defines this threshold as
\begin{equation}
{\bf d}_{c}\left(\ell\right)=\text{min}\left({\bf d}\left(\ell\right),c\left\Vert {\bf d}\right\Vert \right),\label{eq: Clamping}
\end{equation}
with the parameter $c=0.2$ set experimentally, and this is the default
setting in \cite{vedaldi2010vlfeat}\footnote{The 0.2 clamping threshold is 'hard coded' into \cite{vedaldi2010vlfeat}.}.
This is followed by an additional normalization to ensure unit length
of the descriptor is preserved after thresholding. It is important
to note that the thresholding in Eq. \ref{eq: Clamping} maintains
invariance to affine contrast changes. The thresholding process, or
clamping, is thought to reduce the effects of camera saturation or
other illumination effects. However, we will show empirically in Sec.
\ref{subsec:Evaluation} that clamping also increases the general
matching performance of the descriptor, observed to be 14.4\% compared
to the performance without clamping on the Oxford dataset. This occurs
even when there exists consistent lighting conditions between image
pairs. The threshold parameter of $c=0.2$ is set rather arbitrarily;
and is fixed for every descriptor. By applying an automatic threshold
that is allowed to vary for every descriptor, we can significantly
improve the performance of the SIFT descriptor for image matching
problems. 

\section{Meaningful Clamping\label{sec:Meaningful-Clamping}}

The bins of the SIFT descriptor represent the underlying content of
a local image patch. We wish to detect when geometric structure is
present in the patch; and this is indicated by the observation of
large descriptor bin values. This amounts to detecting significant
bins by computing a perception threshold for each descriptor and using
that as the clamping limit. The idea is that once bins reach the perception
threshold, little information is gained by exceeding this value. A
contrario methodology is proposed to compute descriptor perception
thresholds, and is based on applying a mathematical foundation to
the concept of the Helmholtz principal, which states ``we immediately
perceive whatever could not happen by chance'' \cite{desolneux2007gestalt}.
It has been shown to be highly successful for many problems in image
processing such as detecting line segments \cite{von2010lsd}, change
detection \cite{flenner2011helmholtz}, contrasted boundaries \cite{desolneux2001edge},
vanishing points \cite{almansa2003vanishing}, and modes of histograms
\cite{delon2007nonparametric,flenner20082DHist}. 

Instead of trying to define a priori the structure of the underlying
image content, an impossible task for general natural images, we instead
define what it means to have a \emph{lack of structure}. Using the
Helmholtz principal, lack of structure is simply modeled as uniform
randomness, which we call the uniform background model, or the null
hypothesis $\mathcal{H}_{0}$. We assume the descriptor has been generated
from $\mathcal{H}_{0}$, and claim a detection, i.e. significant geometric
content is present, when there is a large deviation from $\mathcal{H}_{0}$.
If the observed event is extremely unlikely to have been generated
from this background model, we claim the event as \emph{meaningful}
because it could not have occurred by random chance.

Let $\Lambda$ be the histogram grid associated with the descriptor
${\bf d}$, which represents a set of $L=n(x)n(y)n(\theta)$ connected
bins such that every bin $\ell=\left(i,j,k\right)\in\Lambda$ contains
a number of sample counts ${\bf d}(\ell)$, and a neighborhood $\mathcal{C}_{\ell}\subset\Lambda$
of bins for which $\ell$ is connected. Introducing a neighborhood
set for each bin allows us to have circular connected angular histograms,
while spatial dimensions are rectangular. We also let $M=\sum_{\ell}{\bf d}\left(\ell\right)$
be the total number of samples of the descriptor and $p(\ell)$ be
the probability that a random sample is drawn in bin $\ell$, which
leads to the definition of the null hypothesis for the descriptor
${\bf d}$.
\begin{defn}
\label{def: background model}Let ${\bf d}$ be a SIFT descriptor
built on the grid $\Lambda$. The descriptor is said to be drawn from
the null hypothesis, $\mathcal{H}_{0}$, if every sample is independent,
identically, and uniformly distributed with $p(\ell)=\frac{1}{L}$
for every bin $\ell\in\Lambda$.
\end{defn}

It follows that the probability at least ${\bf d}(\ell)$ samples
are in bin $\ell$ under the null hypothesis, with $p(\ell)=1/L$,
is given by the binomial tail
\begin{align}
\mathbb{P}\left[k\geq{\bf d}\left(\ell\right)|\mathcal{H}_{0}\right]= & \mathcal{B}\left(M,{\bf d}(\ell),p\left(\ell\right)\right)\nonumber \\
= & \sum_{k={\bf d}\left(\ell\right)}^{M}\left(\begin{array}{c}
M\\
k
\end{array}\right)p(\ell)^{k}\left(1-p(\ell)\right)^{M-k}.\label{eq:Binomial Tail}
\end{align}
 When this probability becomes small, ${\bf d}\left(\ell\right)$
is unlikely to have occurred under the uniform background model, we
then reject the null hypothesis and conclude the bin $\ell$ must
be meaningful\emph{.} This results in detecting meaningful bins by
thresholding the probability in Eq. \ref{eq:Binomial Tail}. Given
the assumption that the data was drawn from the uniform background
model, we can compute for any bin $\ell$ the expected number of false
detections, denoted as NFA for the number of false alarms, as
\begin{equation}
\text{NFA}\left(\ell\right)=\mathcal{N}\mathcal{B}\left(M,{\bf d}(\ell),p(\ell)\right),\label{eq:NFA Definition}
\end{equation}
where $\mathcal{N}$ is the number of tests, and is typically defined
as the number of all possible connected subsets of the histogram.
$\mathcal{N}$ can be seen as a Bonferroni correction \cite{von2009computational,hochberg2009multiple}
for the expected value in Eq. \ref{eq:NFA Definition}. Which leads
to the following definition of a meaningful bin.
\begin{defn}
\label{def: meaningful bin}A bin $\ell\in\Lambda$ of the SIFT descriptor
${\bf d}$ is an $\varepsilon$-meaningful bin if
\[
\text{NFA}(\ell)=\mathcal{N}\mathcal{B}\left(M,{\bf d}(\ell),p(\ell)\right)<\varepsilon.
\]
\end{defn}

This leads to the question of what to use for $\varepsilon$? We can
follow the work of Desolneux, et al. \cite{desolneux2000meaningful},
and always set $\varepsilon=1$, since including the number of tests,
$\mathcal{N}$, allows the threshold to scale automatically with histogram
size. The setting of $\varepsilon=1$ can be interpreted as setting
the threshold so as to limit the expected number of false detections
under a uniform background model to less than one. This has two important
consequences. First, for some applications, it is important for the
algorithm to correctly give zero detections when no object exists.
Second, this strategy gives detection thresholds that are similar
to that of human perception \cite{desolneux2003computational}; and
the dependence on $\varepsilon$ is logarithmic and hence very weak
\cite{von2010lsd}. We will hereafter refer to an $\varepsilon$-meaningful
bin as just a meaningful bin.

We can now select a clamping threshold for ${\bf d}$ as the minimum
descriptor bin value needed to be detected as a meaningful bin. For
a given descriptor ${\bf d}$, with corresponding properties $M$
and $p\left(\ell\right)=1/L$, we define this threshold as
\begin{equation}
t_{{\bf d}}=\text{min}\left\{ k:\mathcal{N}\mathcal{B}\left(M,k,p\left(\ell\right)\right)<1\right\} .\label{eq: Detection Threshold}
\end{equation}
We then proceed to create the new clamped descriptor as 

\begin{equation}
{\bf d}_{t}\left(\ell\right)=\text{min}\left(t_{{\bf d}},{\bf d}\left(\ell\right)\right),\label{eq: Clamped descriptor}
\end{equation}
for every bin $\ell\in\Lambda$.

\section{Implementation Details\label{sec:Implementation-Details}}

The a contrario threshold in Eq. \ref{eq: Detection Threshold} has
dependence on $\mathcal{N}$, which is defined as the number of all
possible connected subsets of $\Lambda$. However, for any histogram
greater than dimension one, we cannot explicitly compute this, and
instead use the number of aligned rectangular regions
\begin{equation}
\mathcal{N}_{\text{Rect}}=\frac{1}{8}n(x)n(y)n(\theta)\left(n(x)+1\right)\left(n(y)+1\right)\left(n(\theta)+1\right).\label{eq:Number of aligned rectangular regions}
\end{equation}
$\mathcal{N}_{\text{Rect}}$ represents a (loose) lower bound of $\mathcal{N}$.
There could also be concern with respect to computing the inverse
binomial tail in Eq. \ref{eq: Detection Threshold}. While efficient
computational libraries exist to directly calculate the detection
threshold\footnote{For example the quantile function in the binomial library of Boost. },
this still requires an iterative method since no closed form solution
exits. This may be undesirable for certain real-time applications.
We can instead create an approximation to Eq. \ref{eq: Detection Threshold}
by applying the bound 
\begin{equation}
-\frac{1}{M}\text{ln}\,\mathbb{P}\left[{\bf d}\left(\ell\right)\geq rM|\mathcal{H}_{0}\right]\leq\frac{\left(r-p\right)^{2}}{p\left(1-p\right)}+O\left(\frac{\text{ln}\,M}{M}\right),\label{eq: Slud bound}
\end{equation}
with $r=k/M$ and $p=1/L$ \cite{desolneux2007gestalt}. The bound
in Eq. \ref{eq: Slud bound} is valid when either (a) $p\leq1/4$
and $p\leq r$, or (b) $p\leq r\leq1-p$ \cite{slud1977distribution}.
As $M$ grows large, the $O\left(\frac{\text{ln}\,M}{M}\right)$ term
becomes small\footnote{For any typical implementation of SIFT, the $O\left(\frac{\text{ln}\,M}{M}\right)$
term is negligible and the bound $-\frac{1}{M}\text{ln}\,\mathbb{P}\left[{\bf d}\left(\ell\right)\geq rM|\mathcal{H}_{0}\right]\leq\frac{\left(r-p\right)^{2}}{p\left(1-p\right)}$
is valid.} and Eq. \ref{eq: Slud bound} converges to the central limit approximation.
Using this we can solve for the detection threshold as 
\begin{equation}
\tilde{t}_{{\bf d}}=Mp+\alpha\left(\mathcal{N}_{\text{Rect}}\right)\sqrt{Mp\left(1-p\right)},\label{eq: Bounded bin threshold}
\end{equation}
with $\alpha\left(\mathcal{N}_{\text{Rect}}\right)=\sqrt{-\text{ln}\left(1/\mathcal{N}_{\text{Rect}}\right)}$.
From this we can compute a new clamped descriptor, ${\bf d}_{\tilde{t}}\left(\ell\right)$,
with Eq. \ref{eq: Clamped descriptor} using the bin threshold $\tilde{t}_{{\bf d}}$
of Eq. \ref{eq: Bounded bin threshold}. Using the approximation $\tilde{t}_{{\bf d}}$
still maintains the property from Eq. \ref{eq: Clamped descriptor}
that ${\bf d}_{\tilde{t}}\left(\ell\right)\leq t_{{\bf d}}$.
\begin{prop}
\label{prop: Approx clamping}Let ${\bf d}_{\tilde{t}}$ be a SIFT
descriptor clamped with the approximate threshold $\tilde{t}_{{\bf d}}$
given in Eq. \ref{eq: Bounded bin threshold}, and $t_{{\bf d}}$
is the exact threshold given in Eq. \ref{eq: Detection Threshold}.
Then, as $M$ grows large, ${\bf d}_{\tilde{t}}\left(\ell\right)\leq t_{{\bf d}}$
for all $\ell\in\Lambda$ such that either (a) $p\leq1/4$ and $p\leq\frac{\tilde{t}_{{\bf d}}}{M}$,
or (b) $p\leq\frac{\tilde{t}_{{\bf d}}}{M}\leq1-p$. 
\end{prop}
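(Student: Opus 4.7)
The approach is to reduce the claim ${\bf d}_{\tilde{t}}(\ell) \leq t_{\bf d}$ to the scalar inequality $\tilde{t}_{\bf d} \leq t_{\bf d}$, and then establish the latter by feeding $r = \tilde{t}_{\bf d}/M$ back into the Slud bound to show that the tail probability at the approximate threshold still sits above the $1/\mathcal{N}$ level required for meaningfulness. The reduction is immediate from Eq.~\ref{eq: Clamped descriptor}: since ${\bf d}_{\tilde{t}}(\ell) = \min(\tilde{t}_{\bf d}, {\bf d}(\ell)) \leq \tilde{t}_{\bf d}$, it suffices to prove $\tilde{t}_{\bf d} \leq t_{\bf d}$.

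For the main computation, I would first note that the two hypotheses (a) and (b) of the proposition match the validity conditions for Eq.~\ref{eq: Slud bound} exactly when $r = \tilde{t}_{\bf d}/M$, so the bound may be applied at that value. The definition of $\tilde{t}_{\bf d}$ in Eq.~\ref{eq: Bounded bin threshold} is engineered so that the right-hand side of Eq.~\ref{eq: Slud bound} collapses cleanly: substituting $r - p = \alpha(\mathcal{N}_{\text{Rect}})\sqrt{p(1-p)/M}$ and using $\alpha(\mathcal{N}_{\text{Rect}})^2 = \ln \mathcal{N}_{\text{Rect}}$ gives $(r-p)^2/(p(1-p)) = (\ln \mathcal{N}_{\text{Rect}})/M$. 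Invoking the large-$M$ form of the bound from the footnote to drop the $O(\ln M / M)$ remainder and exponentiating then yields $\mathbb{P}[{\bf d}(\ell) \geq \tilde{t}_{\bf d} \mid \mathcal{H}_0] \geq 1/\mathcal{N}_{\text{Rect}}$. Because $\mathcal{N}_{\text{Rect}}$ is a lower bound on $\mathcal{N}$, this is also at least $1/\mathcal{N}$, so $\mathcal{N}\mathcal{B}(M, \tilde{t}_{\bf d}, p) \geq 1$.

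To finish, I would appeal to the definition of $t_{\bf d}$ in Eq.~\ref{eq: Detection Threshold}: because $\mathcal{B}(M, \cdot, p)$ is non-increasing in its middle argument, any value at which $\mathcal{N}\mathcal{B} \geq 1$ sits weakly below $t_{\bf d}$, so $\tilde{t}_{\bf d} \leq t_{\bf d}$ (and indeed strictly, once ceiling to the nearest integer is applied), which chains with the first reduction to give the claim. The main technical subtlety I anticipate is the honest treatment of the $O(\ln M / M)$ remainder in Slud's bound: the statement of the proposition already hedges with ``as $M$ grows large,'' and the footnote asserts negligibility for typical SIFT descriptor sizes, but a fully quantitative proof would need to either bound this correction explicitly or absorb a small slack into $\alpha(\mathcal{N}_{\text{Rect}})$ so that the non-strict inequality is preserved. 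The integer-rounding at $\lceil \tilde{t}_{\bf d} \rceil$ is a secondary, essentially cosmetic issue.
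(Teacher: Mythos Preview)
Your proposal is correct and follows essentially the same approach as the paper: both reduce to $\tilde{t}_{\bf d}\le t_{\bf d}$ via Eq.~\ref{eq: Clamped descriptor}, then combine the Slud bound Eq.~\ref{eq: Slud bound} with the fact that $\mathcal{N}_{\text{Rect}}\le\mathcal{N}$ to establish that inequality. The only cosmetic difference is that the paper routes through an intermediate threshold $\bar{k}=\min\{k:\mathcal{N}_{\text{Rect}}\mathcal{B}(M,k,p)<1\}$, proving $\tilde{t}_{\bf d}\le\bar{k}\le t_{\bf d}$ in two steps, whereas you collapse this into a single direct comparison.
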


\begin{proof}
Since $\mathcal{N}_{\text{Rect}}$ is a lower bound on the true number
of tests, $\mathcal{N}$, then $\mathcal{N}_{\text{Rect}}\mathcal{B}\left(M,k,p\left(\ell\right)\right)\leq\mathcal{N}\mathcal{B}\left(M,k,p\left(\ell\right)\right)$
which implies that
\begin{align*}
 & \text{min}\left\{ \bar{k}:\mathcal{N}_{\text{Rect}}\mathcal{B}\left(M,\bar{k},p\left(\ell\right)\right)<1\right\} \\
 & \leq\text{min}\left\{ k:\mathcal{N}\mathcal{B}\left(M,k,p\left(\ell\right)\right)<1\right\} ,
\end{align*}
 and hence $\bar{k}\leq k=t_{{\bf d}}$. From Eq. \ref{eq: Slud bound}
we have $\tilde{t}_{{\bf d}}\leq\bar{k}$, which implies $\tilde{t}_{{\bf d}}\leq t_{{\bf d}}$.
The result follows from Eq. \ref{eq: Clamped descriptor} with ${\bf d}_{\tilde{t}}\left(\ell\right)\leq\tilde{t}_{{\bf d}}\leq t_{{\bf d}}$,
for every bin $\ell\in\Lambda$. 
\end{proof}
The significance of Proposition \ref{prop: Approx clamping} is that
we can safely use Eq. \ref{eq: Bounded bin threshold} and ensure
the descriptor is appropriately clamped without having to determine
the true number of tests, $\mathcal{N}$, or iterate to find the inverse
of the binomial tail. Conditions (a), (b), and the requirement that
$M$ is sufficiently large in Eq. \ref{eq: Slud bound} are very weak
since for any practical implementation of the SIFT descriptor, these
conditions are met. 

\section{Results\label{sec:Results}}

We present image matching results applying the newly developed meaningful
clamping method, and compare it to the clamping procedure proposed
by Lowe. For reference, we also compare both clamping methods to descriptors
with which no clamping was performed.

\subsection{Dataset\label{subsec:Dataset}}

To evaluate matching performance, we use the Oxford dataset \cite{mikolajczyk2005comparison},
which is comprised of 40 image pairs of various scene types undergoing
different camera poses and transformations. These include viewpoint
angle, zoom, rotation, blurring, compression, and illumination. The
set contains eight categories, each of which consists of image pairs
undergoing increasing magnitudes of transformations. Included with
each image pair is a homography matrix, which represents the ground
truth mapping of points between the images. The transformations applied
to the images are real and not synthesized as in \cite{fischer2014descriptor}.
The viewpoint and zoom+rotation categories are generated by focal
length adjustments and physical movement of the camera. Blur is generated
by varying the focus of the camera; and illumination by varying the
aperture. The compression set was created by applying JPEG compression
and adjusting the image quality parameter. 

\subsection{Metrics\label{subsec:Metrics}}

To evaluate the performance of local descriptors with respect to image
matching, we follow the methods of \cite{mikolajczyk2005performance}.
Given a pair of images we extract SIFT features from both images.
A match between two descriptors is determined when the Euclidean distance
is less than some threshold $t$. Any descriptor match is considered
a correct match if the two detected features correspond as defined
in \cite{mikolajczyk2005comparison}. Using the ground truth homography
mapping supplied with the dataset, features are considered to correspond
when the area of intersection over union is greater than 50\% to be
consistent with \cite{mikolajczyk2005performance}. For some value
of $t$ we can compute recall as
\[
\text{recall}(t)=\frac{\text{\# correct matches}\,(t)}{\text{\# correspondences}},
\]
as well as 1-precision
\[
1-\text{precision}(t)=\frac{\text{\# false matches}\,(t)}{\text{\# correct matches}\,(t)+\text{\# false matches}\,(t)}.
\]
The pair $\left(\text{recall}(t),1-\text{precision}(t)\right)$ represents
a point in space; and by varying $t$ we can create curves that demonstrate
the matching performance of the descriptor. This is called the \emph{precision
recall curve}; and we follow the method of \cite{everingham2010pascal}
to compute the area under the curve, producing a value called \emph{average
precision} (AP)\footnote{We use 100 points to sample the precision recall curves as opposed
to 11 proposed in \cite{everingham2010pascal}. This gives a higher
resolution estimate of the AP.}. Larger AP indicates superior matching performance. The average of
APs, across individual categories or the entire dataset, provides
the \emph{mean average precision} (mAP) used to compare clamping methods.

\subsection{Evaluation\label{subsec:Evaluation}}

\begin{table}
\begin{centering}
\begin{tabular}{|c|c|c|c|}
\hline 
Category & No Clamping & Lowe Clamping & MC\tabularnewline
\hline 
\hline 
Graffiti & 0.123 & 0.161 & 0.205\tabularnewline
\hline 
Wall & 0.327 & 0.371 & 0.405\tabularnewline
\hline 
Boats & 0.301 & 0.341 & 0.375\tabularnewline
\hline 
Bark & 0.111 & 0.119 & 0.120\tabularnewline
\hline 
Trees & 0.207 & 0.288 & 0.366\tabularnewline
\hline 
Bikes & 0.414 & 0.371 & 0.496\tabularnewline
\hline 
Leuven & 0.387 & 0.538 & 0.635\tabularnewline
\hline 
UBC & 0.558 & 0.588 & 0.615\tabularnewline
\hline 
All images & 0.303 & 0.347 & 0.402\tabularnewline
\hline 
\end{tabular}
\par\end{centering}
\caption{Mean average precision for each category of the Oxford dataset. SIFT
detector parameter FirstOctave is set to 0.\label{tab: Results FO - 0}}
\end{table}

We compute the AP for every image pair in the Oxford dataset, each
for two different parameter settings of the SIFT detector. This parameter
is called FirstOctave, and we test for both 0 and -1. Setting FirstOctave
to -1 upsamples the image before creating the scale space, generating
a great deal more features than with 0, resulting in more total matches,
but with lower overall AP. It is important to test for this setting
because it allows for greater scale variations between images, and
is the default setting for SIFT in the Covariant Features toolbox
in the popular VLFeat open-source library \cite{vedaldi2010vlfeat}.
It also shows how clamping impacts performance in large sets of SIFT
points, and indicates how well the method scales with large amounts
of data. For certain image pairs, the distortion between images is
great enough, that little or no feature correspondences exist. Under
these circumstances, no matches are found, and we cannot compute the
precision recall curves. We define the AP to be zero in this case.

Table \ref{tab: Results FO - 0} compares the mAP for each category
in the Oxford dataset when the SIFT FirstOctave is set to 0. MC systematically
outperforms Lowe clamping for every image transform type. It also
shows that clamping can improve matching performance in general image
pairs, not just in cases of significant illumination differences.
The leuven category of lighting shows an impressive 18.2\% improvement,
but does not exhibit the greatest gain, which occurred in bikes (blur)
at 33.6\%. The method shows remarkable performance on blurred images,
with trees improving 27.0\%. The bark (zoom+rotation) had the least
improvement at 1.4\%. However, it should be noted that it could be
an artifact of the SIFT detector which extracted few correct correspondences
for this category. Boats, which also varied zoom+rotation, had a 9.9\%
increase. The mean AP for all image pairs of the Oxford dataset improved
by 15.9\% compared to Lowe clamping. Fig. \ref{fig: AP Scatter Plot}
shows a direct comparison between clamping methods, with each point
representing the AP of an image pair. 

\begin{table}
\begin{centering}
\begin{tabular}{|c|c|c|c|}
\hline 
Category & No Clamping & Lowe Clamping & MC\tabularnewline
\hline 
\hline 
Graffiti & 0.016 & 0.035 & 0.110\tabularnewline
\hline 
Wall & 0.230 & 0.270 & 0.320\tabularnewline
\hline 
Boats & 0.054 & 0.118 & 0.244\tabularnewline
\hline 
Bark & 0.049 & 0.063 & 0.068\tabularnewline
\hline 
Trees & 0.043 & 0.096 & 0.173\tabularnewline
\hline 
Bikes & 0.141 & 0.112 & 0.185\tabularnewline
\hline 
Leuven & 0.115 & 0.210 & 0.365\tabularnewline
\hline 
UBC & 0.215 & 0.305 & 0.411\tabularnewline
\hline 
All images & 0.108 & 0.152 & 0.234\tabularnewline
\hline 
\end{tabular}
\par\end{centering}
\caption{Mean average precision for each category of the Oxford dataset. SIFT
detector parameter FirstOctave is set to -1.}

\end{table}

For large scale experiments with the FirstOctave parameter set to
-1, the performance jumps dramatically, and shows that the improvement
in matching increases as the number of points increases. The category
exhibiting the most improvement was graffiti (viewpoint) with a remarkable
215.2\% increase. Again, bark had the least improvement with 7.9\%.
Even with the FirstOctave parameter set to -1, the SIFT detector performed
poorly on the bark category and generated few correspondences, influencing
the matching results as before. As a reference, boats increased by
106.9\%. The mean AP increased by 54.0\% for all image pairs in the
dataset.

It is important to note that while SIFT is used as the detector for
this experiment, other detectors may be used and obtain similar results.
However, much like the SIFT detector, there exist other fundamental
parameters that may greatly influence the number of total points generated.
Experiments point to the number of detected points generated as the
single largest factor relating the amount of improvement over Lowe
clamping. The remarkable property observed in the experiments listed
above is that with a larger amount of detected points to match, the
percentage improvement in AP \emph{increases}. Also of interest, is
that clamping can augment other recent advances in image descriptor
construction, for example DSP-SIFT \cite{dongS15}. 

\begin{figure}
\begin{centering}
\includegraphics[width=8cm]{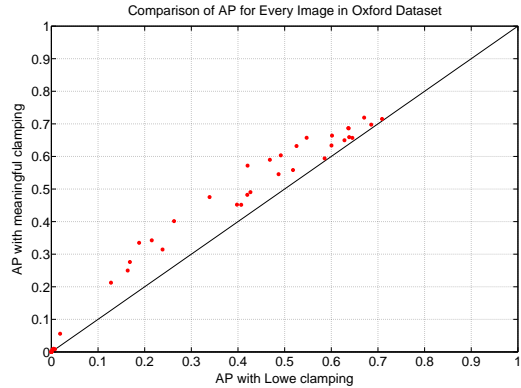}
\par\end{centering}
\caption{The AP of every image pair is represented by a red dot. The x-axis
value is the AP for the pair with Lowe clamping, and the y-axis is
the AP for the same pair with meaningful clamping. The black line
is added for reference. Any point above the line represents an image
pair in the Oxford dataset, such that meaningful clamping increases
AP matching performance.\label{fig: AP Scatter Plot}}

\end{figure}

\section{Conclusions and Future Work\label{sec:Conclusions-and-Future}}

A new method to threshold SIFT descriptors was presented. This method
significantly improves mAP for image matching on the standard Oxford
dataset. Future work is to study the impact meaningful clamping has
on other problems, such as large scale image retrieval. Also of interest
is the study of \emph{why} meaningful clamping (and also clamping
in general) has such a large impact on image matching.

The author conjectures that clamping effects the distribution of large
point sets of descriptors. If the descriptor is not clamped, then
a small number of descriptor bins would dominate all other bins. This
would constrain the points to lie mostly along the axes. Performing
nearest neighbor-type searches could become ambiguous, since many
points would exist with a similar spatial distance. By clamping, we
are thresholding bin magnitudes; and this causes the points to 'spread
out', and more uniformly occupy the $\mathbb{R}_{+}^{L}$ space in
which the descriptors lie. This conjecture is supported by the observation
in the presented experiments that the improvement drastically increased
when attempting to match larger sets of points extracted from the
image pairs when the SIFT detector parameter FirstOctave was changed
to -1, generating many more features for each image. 

\section*{Acknowledgments}

The author would like to thank UCLA professor Stefano Soatto for many
great discussions on the topic of clamping SIFT descriptors, and his
student Jingming Dong for direction on descriptor evaluation. The
author would also like to thank Lawrence Peterson, Shawnasie Kirchner,
and the anonymous reviewers for spending valuable time to provide
a careful and thorough review of this paper.

\bibliographystyle{plain}
\nocite{*}
\bibliography{ICIPRef}

\end{document}